\definecolor{mygray}{gray}{0.6}
\newtheorem{definition}{Definition}
\newtheorem{theorem}{Theorem}
\newtheorem{lemma}{Lemma}
\begin{document}
\begin{sloppypar}
\begin{frontmatter}

%% Title, authors and addresses

%% use the tnoteref command within \title for footnotes;
%% use the tnotetext command for theassociated footnote;
%% use the fnref command within \author or \address for footnotes;
%% use the fntext command for theassociated footnote;
%% use the corref command within \author for corresponding author footnotes;
%% use the cortext command for theassociated footnote;
%% use the ead command for the email address,
%% and the form \ead[url] for the home page:
%% \title{Title\tnoteref{label1}}
%% \tnotetext[label1]{}
%% \author{Name\corref{cor1}\fnref{label2}}
%% \ead{email address}
%% \ead[url]{home page}
%% \fntext[label2]{}
%% \cortext[cor1]{}
%% \affiliation{organization={},
%%             addressline={},
%%             city={},
%%             postcode={},
%%             state={},
%%             country={}}
%% \fntext[label3]{}

\title{Privacy-Preserving Federated Deep Clustering based on GAN}

%% use optional labels to link authors explicitly to addresses:
%% \author[label1,label2]{}
%% \affiliation[label1]{organization={},
%%             addressline={},
%%             city={},
%%             postcode={},
%%             state={},
%%             country={}}
%%
%% \affiliation[label2]{organization={},
%%             addressline={},
%%             city={},
%%             postcode={},
%%             state={},
%%             country={}}

% Group authors per affiliation:
\author[label1]{Jie Yan}

\author[label1]{Jing Liu}

\author[label1]{Ji Qi}

\author[label1]{Zhong-Yuan Zhang\corref{mycorrespondingauthor}}

\address[label1]{School of Statistics and Mathematics, \\ Central University of Finance and Economics, Beijing, P.R.China}

\cortext[mycorrespondingauthor]{Corresponding author}
\ead{zhyuanzh@gmail.com}

\begin{abstract}
Federated clustering (FC) is an essential extension of centralized clustering designed for the federated setting, wherein the challenge lies in constructing a global similarity measure without the need to share private data. Conventional approaches to FC typically adopt extensions of centralized methods, like K-means and fuzzy c-means. However, these methods are susceptible to non-independent-and-identically-distributed (non-IID) data among clients, leading to suboptimal performance, particularly with high-dimensional data.  In this paper, we present a novel approach to address these limitations by proposing a Privacy-Preserving Federated Deep Clustering based on Generative Adversarial Networks (GANs). Each client trains a local generative adversarial network (GAN) locally and uploads the synthetic data to the server. The server applies a deep clustering network on the synthetic data to establish $k$ cluster centroids, which are then downloaded to the clients for cluster assignment. Theoretical analysis demonstrates that the GAN-generated samples, shared among clients, inherently uphold certain privacy guarantees, safeguarding the confidentiality of individual data. Furthermore, extensive experimental evaluations showcase the effectiveness and utility of our proposed method in achieving accurate and privacy-preserving federated clustering.
\end{abstract}
%Local similarity measures may be inadequate for accurate local data grouping, and privacy constraints prohibit measuring similarity across clients.

\begin{keyword}
%% keywords here, in the form: keyword \sep keyword
Federated clustering, deep clustering, non-IID data, GAN-generated samples,  privacy
guarantees.
%% PACS codes here, in the form: \PACS code \sep code

%% MSC codes here, in the form: \MSC code \sep code
%% or \MSC[2008] code \sep code (2000 is the default)

\end{keyword}

\end{frontmatter}

%% \linenumbers

%% main text
\section{Introduction}
Clustering, a fundamental task in machine learning, aims to group similar samples, serving as a crucial initial step for various data mining tasks, including domain adaptation \cite{li2021domain}, anomaly detection \cite{markovitz2020graph}, and representation learning \cite{caron2020unsupervised,rezaei2021learning}. Traditionally, clustering is performed in a centralized manner, assuming data stored on a central server for model training. However, in practical scenarios, data may be distributed among numerous client devices, such as smartphones, and can only be kept local at the clients due to privacy restrictions. Figure \ref{toy_case_vis} illustrates that relying solely on local similarity is inadequate for accurately grouping local data, while utilizing global similarity can yield better results. Unfortunately, accessing the global real dataset is unfeasible since sharing local data among clients is strictly prohibited. Thus, the crux of the matter lies in devising a means to measure global similarity without sharing private data.

\begin{figure}[!t]
\centering
%\vspace{-3cm}
\subfigure[Client 1]{
\includegraphics[height = 3cm, width = 4cm]{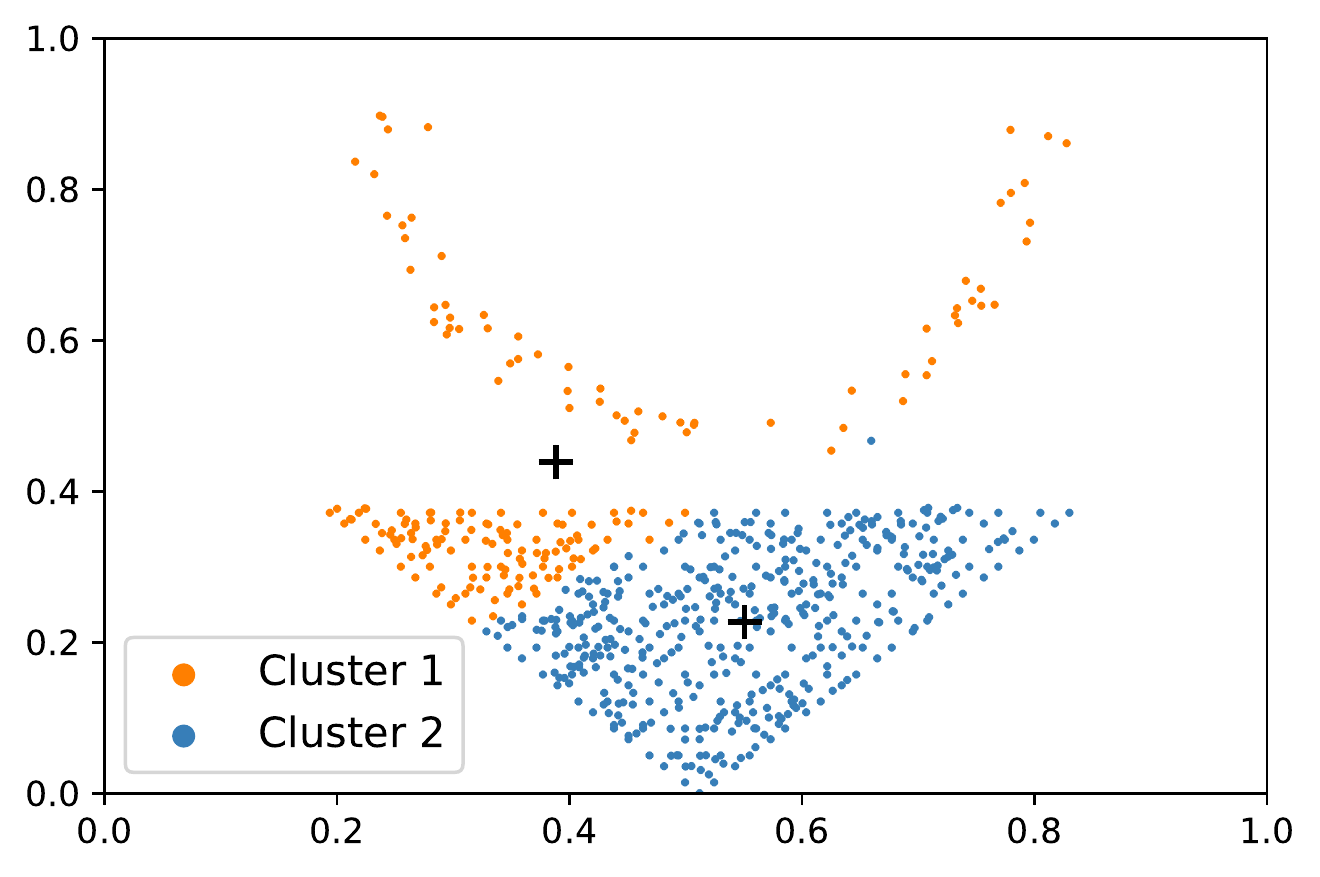}}
\quad
\subfigure[Client 2]{
\includegraphics[height = 3cm, width = 4cm]{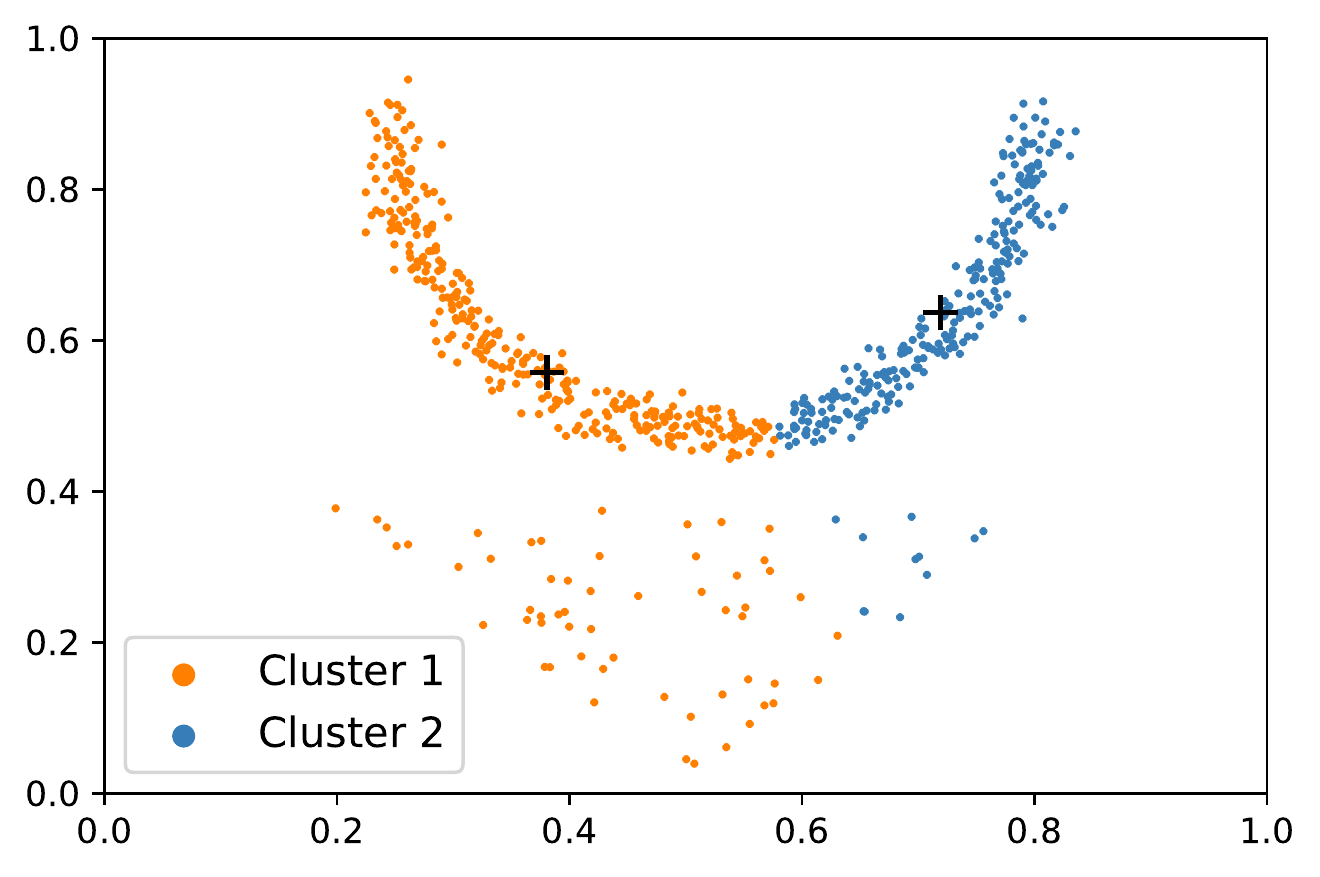}}

\subfigure[Global real dataset]{
\includegraphics[height = 3cm, width = 4cm]{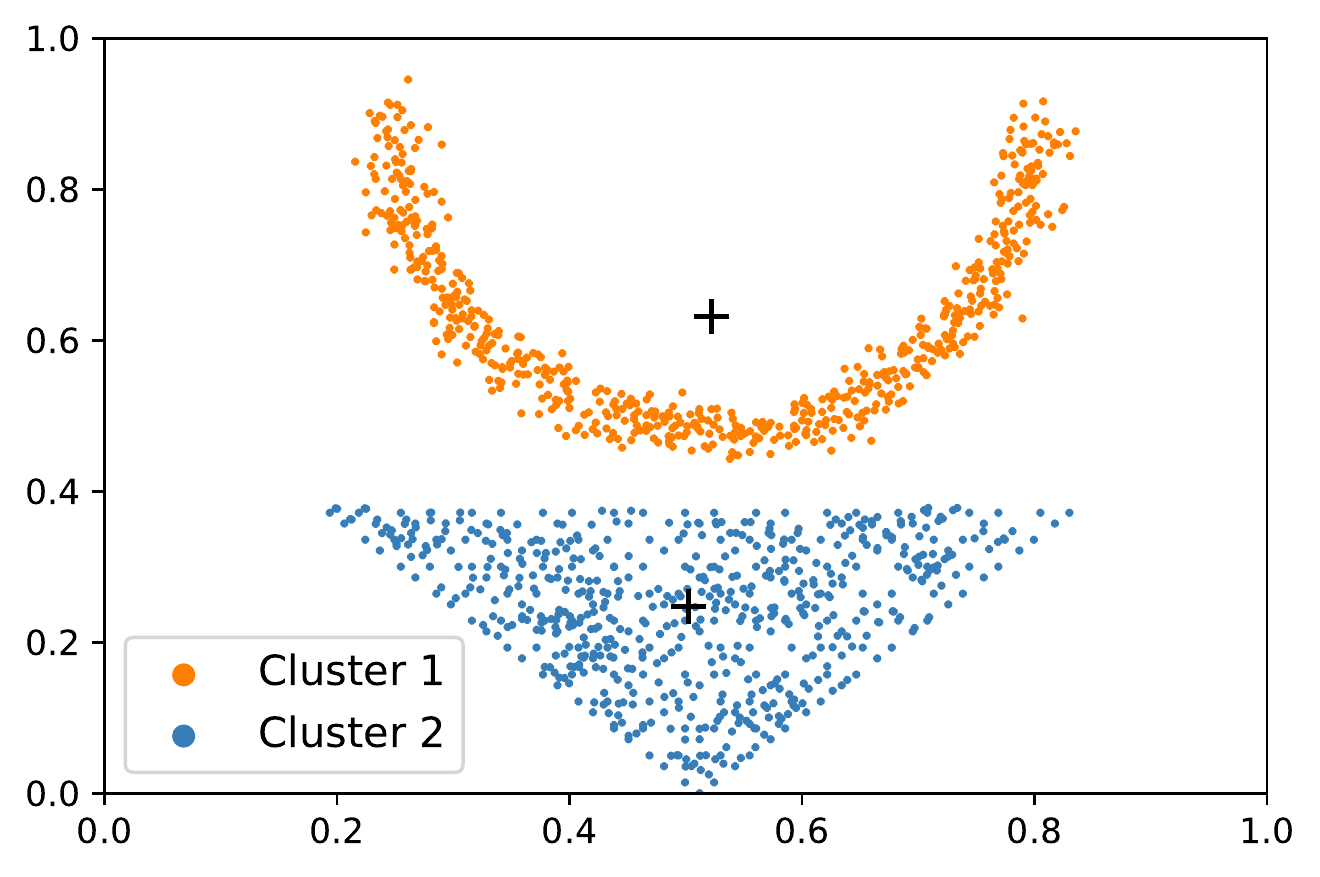}}
\quad
\subfigure[Global synthetic dataset]{
\includegraphics[height = 3cm, width = 4cm]{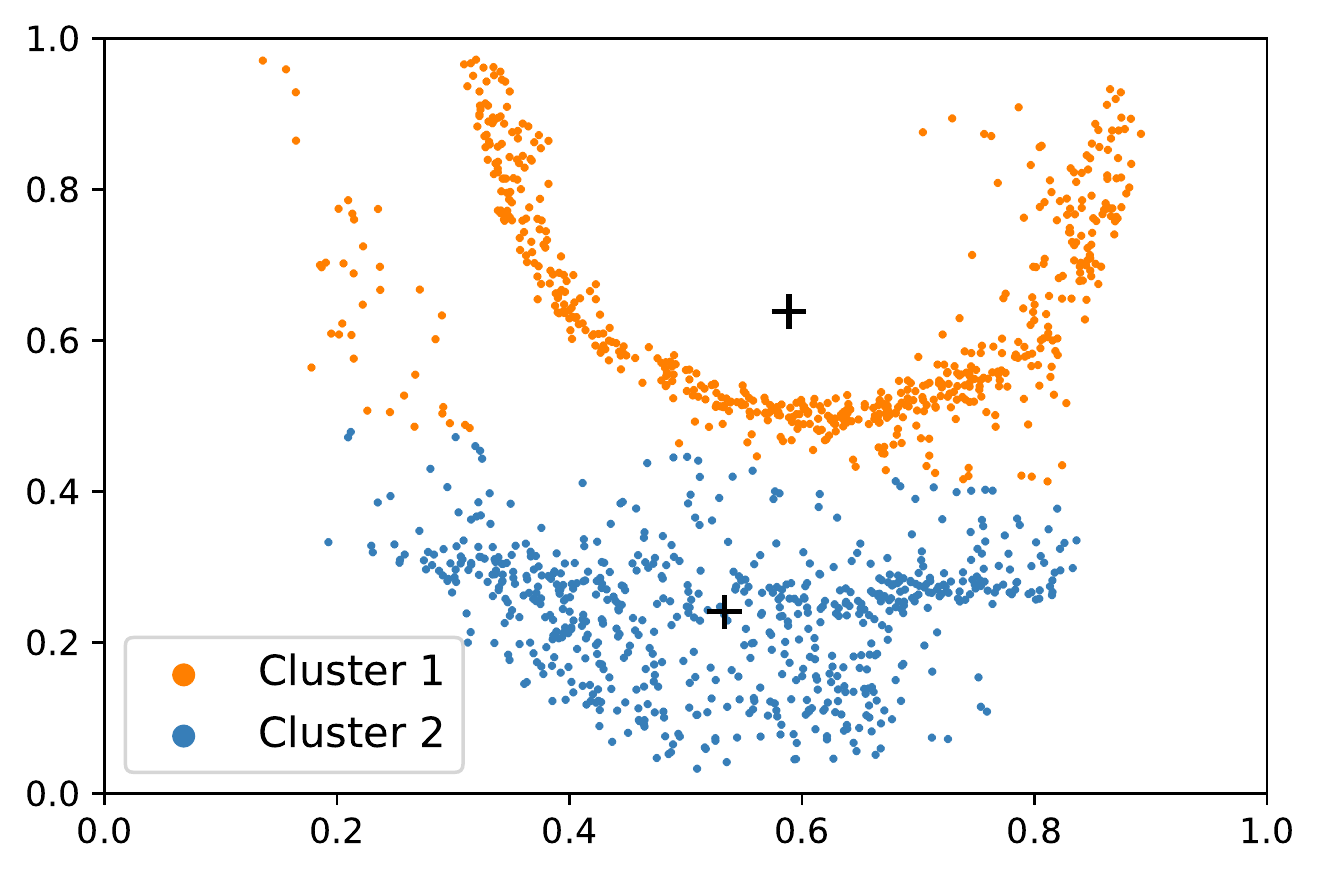}}
\caption{A toy example to illustrate the inherent interest of federated clustering. (a)-(b) The local data on two client devices. (c) All local data are combined in a central storage. (d) GAN generated synthetic data used in our model. For each dataset, we perfromed  K-means on it. Cluster centroids are indicated by "+" and samples are colored by the clustering results. Only (c) correctly identifies two clusters, and the centroids in (d) are the closest to those in (c).}
\label{toy_case_vis}
\end{figure}

To address this challenge, a novel clustering protocol, Federated Clustering (FC), has emerged, aiming to cluster data based on a global similarity measure while preserving data locally \cite{dennis2021heterogeneity, stallmann2022towards}. Extensions of traditional methods, such as K-means (KM) and fuzzy c-means (FCM), have been proposed for FC, referred to as k-FED \cite{dennis2021heterogeneity} and Federated Fuzzy C-Means (FFCM) \cite{stallmann2022towards}. These extensions iteratively estimate global and local data centroids, mining local centroids from private data, and uploading them to a server where KM is applied to create $k$ global cluster centroids for preserving global similarity information. Despite their effectiveness, these approaches suffer from three main limitations. Firstly, the constructed global cluster centroids may be sensitive to varying levels of non-independent-and-identically-distributed (non-IID) data among clients, as the distribution of local data heavily impacts local cluster centroids. Secondly, they assume data follows a Gaussian mixture model, which rarely holds true for real-world datasets in their original data space. Lastly, these algorithms perform poorly on high-dimensional data due to the curse of dimensionality.

To overcome these challenges, we propose a novel framework called \textbf{p}rivacy-\textbf{p}reserving \textbf{f}ederated deep \textbf{c}lustering based on \textbf{GAN} (\textbf{PPFC-GAN}). This approach involves training Generative Adversarial Networks (GANs) \cite{goodfellow2014generative} locally on clients' private data, generating synthetic data that alleviates the non-IID issue. Subsequently, a Deep Clustering Network (DCN) is applied to the synthetic data to perform dimension reduction and KM simultaneously, constructing $k$ global cluster centroids in the latent space. These centroids are then downloaded to the clients, and final cluster assignments are determined based on the distance between local data and centroids. Although sharing synthetic data as a substitute for sharing private data is a natural approach in federated scenarios, it still raises potential privacy concerns \cite{Augenstein2020Generative, chen2020gs, xin2020private}. Therefore, it is crucial to establish the theoretical foundation for the privacy guarantees associated with the synthetic data utilized in our proposed method, as analyzed in Sect. \ref{theorem}.

\begin{table}[!t]
%\vspace{-3cm}
%\setlength{\abovecaptionskip}{10pt}%
%\setlength{\belowcaptionskip}{10pt}%
%\captionsetup{width=1.3\textwidth, font={scriptsize}} %调整表格标题宽度
\caption{NMI of clustering methods on the toy example. The results indicate that K-means (KM) and fuzzy c-means (FCM) can recover two true clusters in the centralized setting, but their extensions fail to do so in the federated setting.}
%\scriptsize
%\hspace{-20mm}
%\scalebox{0.7}{%放缩
\renewcommand{\arraystretch}{1.25} %调行间距
\tabcolsep 4.6mm %space between two columns. 用于调整列间距
\begin{tabular}{cccccc}
\hline\hline
\multicolumn{2}{c}{Centralized setting} &\multicolumn{3}{c}{Federated setting}\\ \cmidrule(r){1-2} \cmidrule(r){3-5}
KM &FCM & k-FED &FFCM &ours\\
\hline
1 &1  &0.7352 &0.3691 &1\\

\hline\hline
\end{tabular}\label{toy_case_nmi}%}
\end{table}

The effectiveness of our proposed method is intuitively illustrated by a toy example in Fig. \ref{toy_case_vis}, and the results are presented in Table \ref{toy_case_nmi}, demonstrating that: 1) The synthetic data generated is a good proxy for the real data. 2) Our proposed method exhibits superior performance compared to alternatives. In addition to the aforementioned challenges, federated settings involve other concerns, such as expensive communication, systems heterogeneity, model inefficiency, and device failures. Nevertheless, the proposed method effectively addresses these concerns, requiring only one communication round between the central server and clients, and can run asynchronously while remaining robust to device failures.

In summary, our contributions are as follows: 1) The proposed method significantly outperforms k-FED and FFCM. 2) Many studies \cite{zhu2021federated} have highlighted the adverse effects of non-IID data among clients, and our method resolves this problem by sharing synthetic data generated from local GANs. 3) We theoretically prove that the GAN-generated samples shared by clients inherently satisfy certain privacy guarantees. 4) Systematic experiments reveal that the proposed method is more effective and robust than the baselines in immunizing against non-IID problems and device failures, and can even benefit from some non-IID scenarios.

The rest of this paper is organized as follows: Sect. \ref{related_work} provides an overview of representative methods in centralized and federated clustering, respectively. Following that, Sect. \ref{sect3} introduces some preliminaries about GANs and then presents our new federated deep clustering method. Sect. \ref{sect4} demonstrates the advantages of our proposed method. Finally, Sect. \ref{sect6} concludes this paper.

\begin{figure}[!t]
\centering
%\vspace{-3cm}
\subfigure[DCN architecture]{
\includegraphics[height = 4cm, width = 8cm]{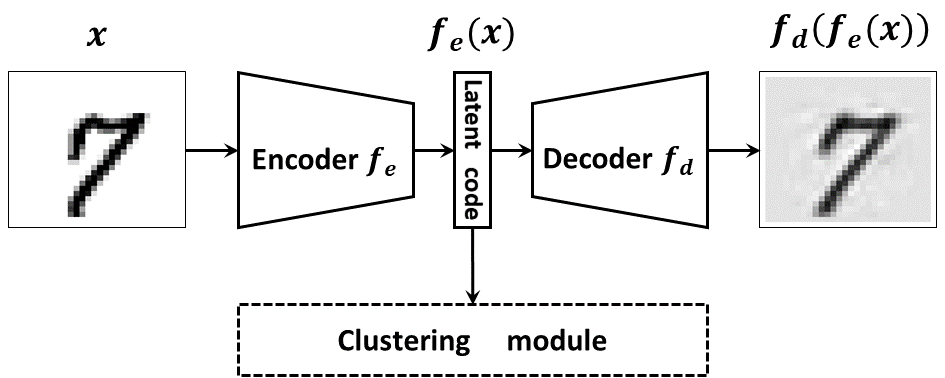}}

\subfigure[Original data space]{
\includegraphics[height = 4cm, width = 4cm]{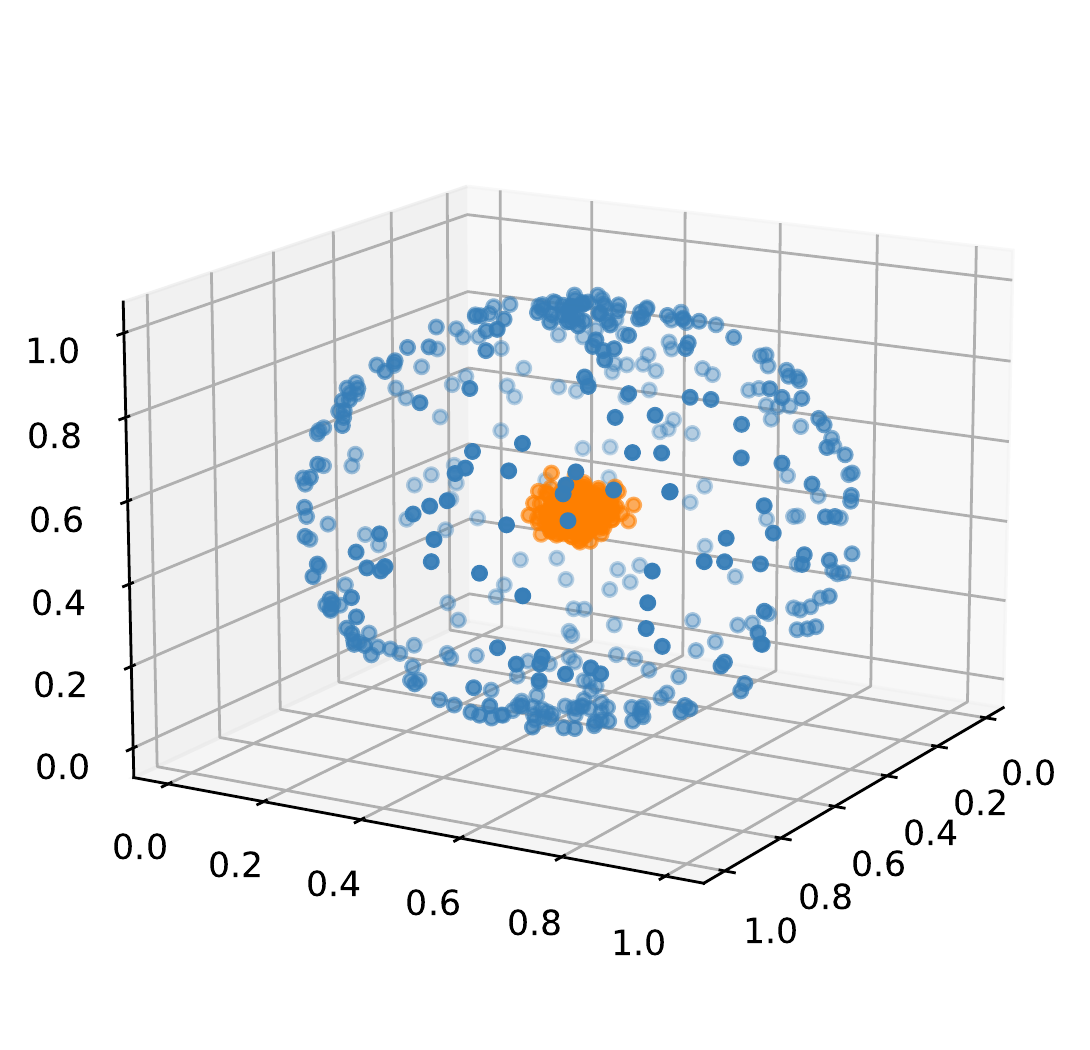}}
\quad
\subfigure[Latent space]{
\includegraphics[height = 3.5cm, width = 4cm]{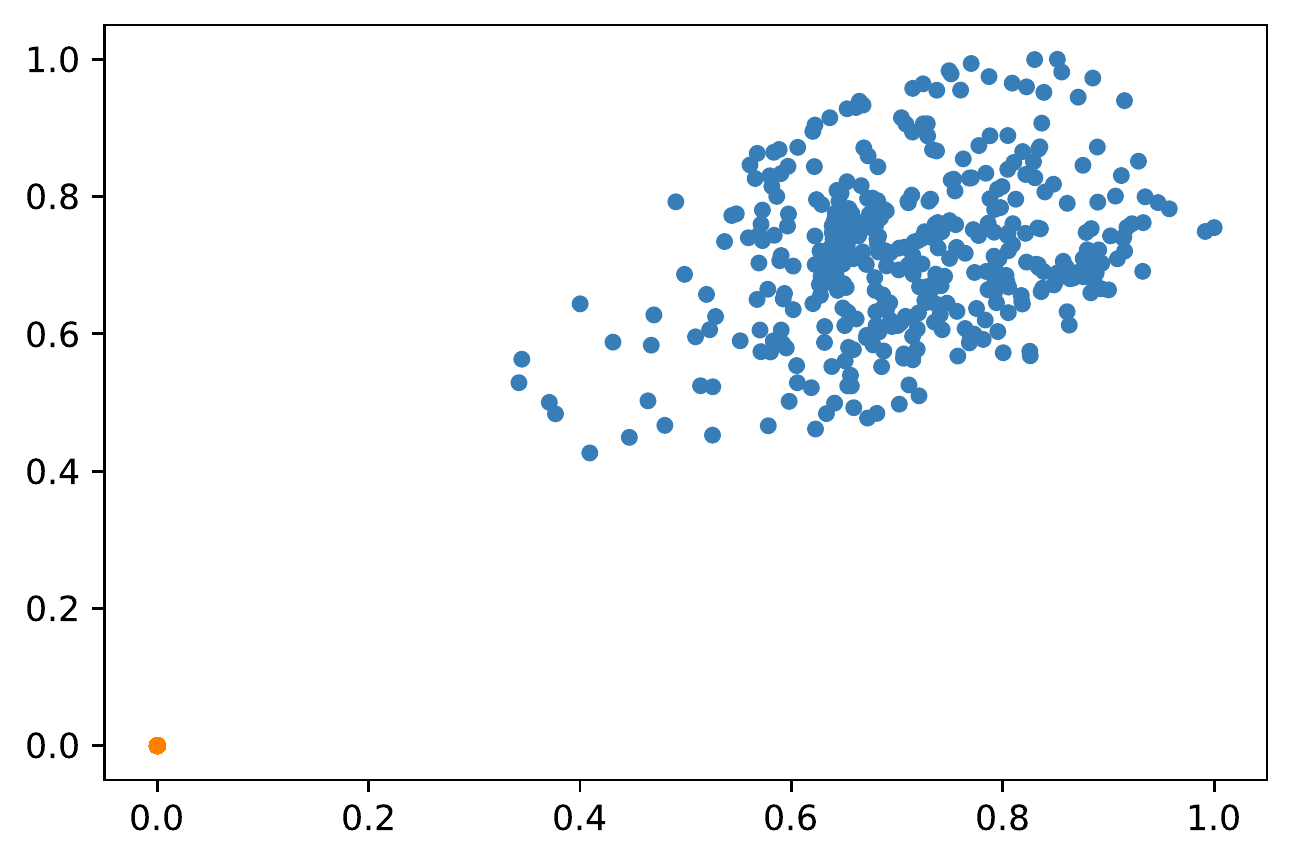}}
\caption{(a) The DCN architecture. (b)-(c) A toy example to illustrate the effectiveness of DCN. Each color corresponds to a cluster. In the original data space, recovering the true clusters using KM is challenging, but in the latent space, it becomes easier to achieve.}
\label{case_1_vis}
\end{figure}

\section{Related Work}\label{related_work}
In this section, we will review some representative methods in centralized clustering and federated clustering, respectively.

\subsection{Centralized clustering}\label{DCN_related}
The prominent approach for centralized clustering is K-means (KM) \cite{macqueen1967classification}, widely used despite its limitations, including the assumption of Gaussian data distribution and inefficiency in high-dimensional spaces due to the curse of dimensionality. To address these challenges, dimension reduction techniques have been employed to transform high-dimensional data into a lower-dimensional latent space, which retains meaningful properties and is more amenable to K-means clustering. Representative methods encompass Principal Component Analysis (PCA), Nonnegative Matrix Factorization (NMF), and Stacked Autoencoders (SAE) \cite{vincent2010stacked}. Recent research demonstrates that combining dimension reduction and clustering methods can enhance performance, leading to the development of Deep Clustering Network (DCN) \cite{yang2017towards}.

As depicted in Fig. \ref{case_1_vis}, DCN comprises two components: an SAE-based dimension reduction module and a clustering module (KM), with alternating parameter optimization. The SAE consists of an encoder, which generates a compressed low-dimensional representation of input data, and a decoder, which reverses the process. An effective SAE within DCN should retain essential information from the input data, resulting in low-dimensional representations more suitable for KM. The objective function of DCN is defined as follows:
\begin{equation}
\mathop{\min}\limits_{f_e,\, f_d,\, C, \left\{ s_i\right\}} \sum_{i=1}^{N}\left( \left\|x_i - f_d(f_e(x_{i}))\right\|_{2}^{2} + \frac{\lambda}{2}\left\|f_e (x_{i}) - C{s}_{i}\right\|_{2}^{2}\right),
\label{DCN}
\end{equation}
where $f_e$ is the encoder that inputs samples and outputs their low-dimensional representations, $f_d$ is the decoder that inputs the low-dimensional representations and outputs the reconstructed samples, $C$ is the representation matrix of cluster centroids and each column of it corresponds to a cluster centroid, $s_i$ is the hard assignment vector of $x_i$ and it is a one-hot vector, and $\lambda$ is a tradeoff hyperparameter.

Although DCN effectively addresses complex data clustering in centralized settings, its potential in federated scenarios remains unexplored. This work aims to extend DCN to federated clustering.

\subsection{Federated clustering}
Federated clustering aims to cluster data based on a global similarity measure while preserving privacy, prohibiting direct measurement of similarity among samples across clients. The primary challenge is to measure global similarity while keeping all data local.

To tackle this challenge, two similar methods, k-FED \cite{dennis2021heterogeneity}, and Federated Fuzzy C-means (FFCM) \cite{stallmann2022towards}, were proposed. In these methods, each client runs a classic centralized clustering method on its local data to generate local cluster centroids, which are then uploaded to the central server. The central server constructs $k$ global cluster centroids by running KM on the uploaded local cluster centroids. The classic centralized clustering method used in k-FED is KM and that used in FFCM is fuzzy c-means (FCM). However, these methods suffer from three main limitations. Firstly, the constructed global cluster centroids may be sensitive to varying levels of non-independent-and-identically-distributed (non-IID) data among clients, as the distribution of local data heavily impacts the local centroids. Secondly, they assume data follows a Gaussian mixture model, rarely satisfied in real-world datasets in their original data space. Lastly, these algorithms perform poorly on high-dimensional data due to the curse of dimensionality.

To address the first limitation, we observe that the non-IID level measures the heterogeneity degree among local data distributions, which is independent of the global distribution. Constructing a good approximation of the global data could potentially immunize the model against the non-IID problem. Additionally, dimensionality reduction methods can transform high-dimensional data into a lower-dimensional space that retains meaningful properties and is more suitable for clustering \cite{yang2017towards}. Building upon these insights, we propose a simple yet effective federated deep clustering framework, \textbf{p}rivacy-\textbf{p}reserving \textbf{f}ederated deep \textbf{c}lustering based on \textbf{GAN} (\textbf{PPFC-GAN}), which extends DCN to federated settings.

\section{Privacy-Preserving Federated Deep Clustering based on GAN (PPFC-GAN)}
\label{sect3}

In this section, we first introduce some preliminaries. Then, we propose a new federated deep clustering framework with GAN-based data synthesis, which is called \textbf{p}rivacy-\textbf{p}reserving \textbf{f}ederated deep \textbf{c}lustering based on \textbf{GAN} (\textbf{PPFC-GAN}).

\subsection{Preliminaries}
\subsubsection{Generative adversarial network (GAN)}
Generative Adversarial Networks (GANs) have proven highly successful in diverse generative tasks, including image generation \cite{liu2021blendgan}, image super-resolution \cite{shi2022latent}, and image completion \cite{liu2021pd}. A vanilla GAN consists of two networks: the generator and discriminator. The generator aims to produce synthetic samples that deceive the discriminator, which, in turn, strives to differentiate between synthetic and real samples. The training process concludes when the discriminator can no longer distinguish between the two, indicating that the generator has approximated the real data distribution, achieving the theoretical global optimum. The GAN's objective function is defined as follows:
\begin{equation}
\mathop{\min}\limits_{G}\mathop{\max}\limits_{D}\mathop{\mathbf{E}}
\limits_{z \sim \mathcal{N}}\log(1 - D(G(z))) + \mathop{\mathbf{E}} \limits_{x \sim p_r}\log(D(x)),
\end{equation}
where $G$ is the generator that inputs a noise $z$ and outputs a synthetic sample, $\mathcal{N}$ is Gaussian distribution, $D$ is the discriminator that inputs a sample and outputs a scalar to tell the synthetic samples from the real ones, and $p_r$ is the distribution of real data.

\begin{figure}[!t]
\centering
\subfigure[$z \sim$ $\mathcal{N}\left(0, \, I\right)$]{
\includegraphics[height = 3cm, width = 4cm]{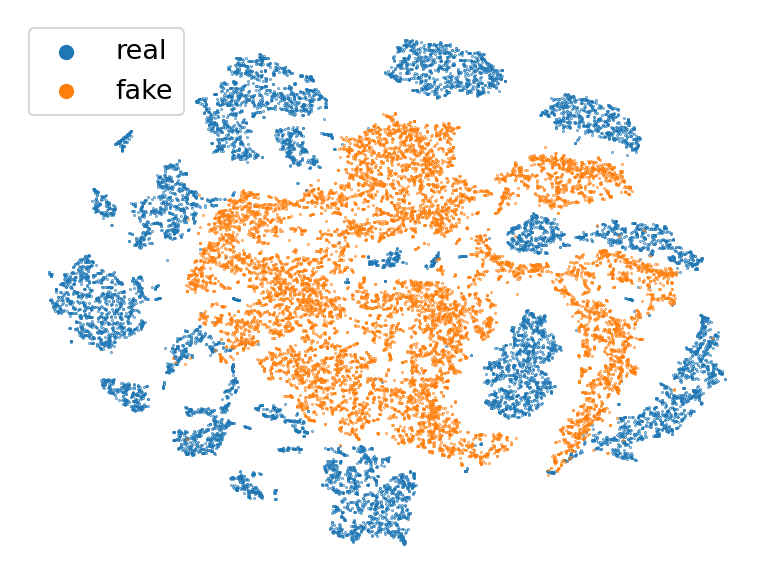}}
\quad
\subfigure[$z \sim (z_n,\, z_c)$ ]{
\includegraphics[height = 3cm, width = 4cm]{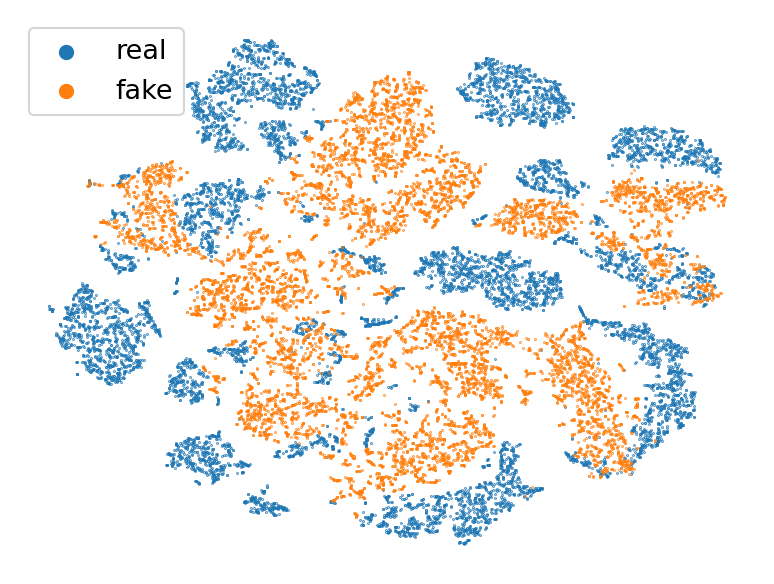}}
\caption{t-SNE visualization of the synthetic data for GANs trained with different priors on Pendigits. The mixture prior can alleviate the mode collapse problem.}
\label{priors}
\end{figure}

However, practical GAN training encounters challenges, particularly the well-known problem of unstable adversarial training, leading to mode collapses \cite{Luke2017}. In mode collapses, synthetic samples exhibit high quality but low diversity, capturing only a fraction of real data characteristics.

To address mode collapses, the method introduced in \cite{mukherjee2019clustergan} incorporated an additional categorical variable into the generator's input, resulting in synthetic data exhibiting a clearer cluster structure and increased diversity (Fig. \ref{priors}). In our approach, we also adopt a combination of both discrete categorical variables and continuous random variables as input to the generator, with the goal of mitigating mode collapses. The revised GAN objective function is defined as:
\begin{equation}
\mathop{\min}\limits_{G}\mathop{\max} \limits_{D}\mathop{\mathbf{E}}
\limits_{z \sim (z_n,\, z_c)} \log(1 - D(G(z))) + \mathop{\mathbf{E}} \limits_{x \sim p_r}\log(D(x)),
\label{GAN}
\end{equation}
where $z_n \sim$ $\mathcal{N}\left(0, \, I\right)$, \,$z_c = e_u, \,u \sim \mathcal{U}\{1,\, 2,\, \cdots,\, k\}$, and $e_u$ is a one-hot vector with the $u$-th element being 1.

\subsection{Privacy-Preserving Federated Deep Clustering based on GAN (PPFC-GAN)}
To extend the Deep Clustering Network (DCN) to the federated setting, a straightforward integration of DCN into the Federated Averaging (FedAvg) \cite{mcmahan2017communication} framework involves averaging parameters from local DCN models trained on client devices. However, this approach may prove ineffective and communication-inefficient due to the non-IID problem.

To address these concerns, our key insight is that for a given federated dataset, the non-IID level quantifies the heterogeneity degree among local data distributions and remains independent of the global distribution. Therefore, a good approximation of the global data may immunize the model against the non-IID problem. Inspired by this, we propose a simple yet effective federated deep clustering framework, \textbf{p}rivacy-\textbf{p}reserving \textbf{f}ederated deep \textbf{c}lustering based on \textbf{GAN} (\textbf{PPFC-GAN}), extending DCN to federated settings.

PPFC-GAN requires only one round of communication between clients and the central server and consists of two main steps: global synthetic data construction and cluster assignment. The details are as follows:

\subsubsection{Global synthetic data construction}
Given a real-world dataset $X$ distributed among $m$ clients, i.e., $X=\bigcup_{i=1}^{m} X^{(i)}$, each client $i$ ($i = 1,\, 2,\, \cdots, \, m$) downloads an initial GAN model from the central server and trains it with their local data $X^{(i)}$. Subsequently, each client $i$ utilizes the trained generator $G^{(i)}$ to generate a dataset $\hat{X}^{(i)}$ of the same size as $X^{(i)}$ and upload the generated dataset to the central server. Finally, the global synthetic dataset $\hat{X}$ is obtained by merging all generated datasets, i.e., $\hat{X}=\bigcup_{i=1}^{m} \hat{X}^{(i)}$.

\subsubsection{Cluster assignment}
The central server trains a DCN model with the global synthetic dataset $\hat{X}$. It then provides the trained encoder $f_e$ and the learned representation matrix of cluster centroids $C$ to each client. Each local data point can be labeled by solving the optimization problem:
\begin{equation}
\underset{i=\{1,\, \cdots,\, k\}}{\arg \min }\left\|f_e(x)-{c}_{i}\right\|_{2},
\end{equation}
where $c_i$ is the $i$-th column of $C$, representing the $i$-th cluster centroid.

\subsubsection{Theoretical analysis}
\label{theorem}
In federated scenarios, although sharing synthetic data as a substitute for sharing private data is a natural approach, it may raise privacy concerns \cite{Augenstein2020Generative, chen2020gs, xin2020private}. Therefore, establishing the theoretical foundation for the privacy guarantees associated with synthetic data used in our proposed method is crucial.

We start with the formal privacy definition \cite{dwork2008differential, dwork2014algorithmic} and a lemma demonstrating privacy guarantees of GAN-generated samples in centralized scenarios \cite{dwork2008differential, dwork2014algorithmic}. Building upon this, we analyze the privacy guarantees of the synthetic data in the proposed method through Theorem \ref{thm}, affirming that GAN-generated samples shared by clients inherently satisfy privacy guarantees.

%The differential privacy (DP) \cite{dwork2008differential, dwork2014algorithmic} has been used in many applications \cite{Augenstein2020Generative, chen2020gs, xin2020private, lin2021privacy, dockhorn2022differentially, ghalebikesabi2023differentially} and the lemma has proven the privacy guarantees of the synthetic data generated by GAN in centralized scenarios \cite{lin2021privacy}. They are described as follows.

%The second one is probabilistic differential privacy \cite{meiser2018approximate}, which is a stronger notion of DP.

\begin{definition}
(Differential privacy \cite{dwork2008differential, dwork2014algorithmic}) If two datasets, $X_0$ and $X_1$, differ in only one sample, we refer to them as neighboring datasets. A mechanism $M$ gives $(\epsilon, \delta)$-differential privacy if for any neighboring datasets $X_0$ and $X_1$, and any set $S \subseteq range(M)$,
\begin{equation}
Pr\left[M\left(X_0\right) \in S\right] \leq e^\epsilon Pr\left[M\left(X_1\right) \in S\right] + \delta. \nonumber
\end{equation}
$\epsilon$ refers to the privacy budget and governs the level of protection and the amount of noise introduced. $\delta$ represents the probability of violating DP constraints.
\end{definition}

%\noindent\textbf{Definition 2.} (Probabilistic differential privacy \cite{meiser2018approximate}) A mechanism $M$ gives $(\epsilon, \delta)$-probabilistically-differentially-privacy if, for any neighboring datasets $X_0$ and $X_1$, there exists sets $S_0 \subseteq range(M)$, where $Pr\left[M\left(X_0\right) \in S_0\right] \leq \delta$, such that for any set $S \subseteq range(M)$,
%\begin{equation}
%Pr\left[M\left(X_0\right) \in S\, \backslash\, S_0 \right] \leq e^\epsilon Pr\left[M\left(X_1\right) \in S\, \backslash\, S_0 \right]. \nonumber
%\end{equation}
%\\

\begin{lemma}
\cite{lin2021privacy} Given a GAN trained on $n$ samples and used to generate $s$ samples, the generated samples guarantee $(\epsilon, \delta)$-differential privacy, where $\delta$ scales as $O(s/n)$.
\end{lemma}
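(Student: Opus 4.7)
The plan is to leverage the post-processing property of differential privacy together with a bound on how much the generator's output distribution can change when a single training point is swapped, and then pay a factor of $s$ for releasing $s$ samples.

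First, I would fix two neighboring training sets $X_0$ and $X_1$ differing in exactly one sample, and let $G_0, G_1$ denote the generators obtained by the GAN training procedure on each, with induced sample distributions $p_0, p_1$ respectively. The empirical distributions underlying $X_0$ and $X_1$ differ only in a single atom of mass $1/n$, so their total variation distance is at most $1/n$. Under the assumption that GAN training is stable in the appropriate sense (e.g., the generator is trained with a DP mechanism such as DP-SGD, or the training map from data to generator is sufficiently smooth), this propagates to a bound $\mathrm{TV}(p_0, p_1) = O(1/n)$.

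Next, I would analyze the release of $s$ i.i.d.\ samples. By the standard coupling argument for product measures (or equivalently, a union bound over which coordinate first differs), $\mathrm{TV}(p_0^{\otimes s}, p_1^{\otimes s}) \leq s \cdot \mathrm{TV}(p_0, p_1) = O(s/n)$. I would then convert this total variation bound to an $(\epsilon, \delta)$-DP statement via the standard implication that $\mathrm{TV}(M(X_0), M(X_1)) \leq \delta$ yields $(0,\delta)$-DP, which in turn implies $(\epsilon, \delta)$-DP for any $\epsilon \geq 0$, giving $\delta = O(s/n)$. Any subsequent use of the synthetic samples, such as training a DCN on them or extracting centroids, inherits the same guarantee by post-processing.

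The main obstacle is justifying the propagation from the $O(1/n)$ total variation bound on the \emph{empirical} distributions to an $O(1/n)$ bound on the \emph{generator} distributions $p_0, p_1$. GAN training is non-convex and can in principle be very sensitive to a single training example, so this step typically requires either explicitly training the generator with a privacy-preserving optimizer that enforces stability (e.g., gradient clipping plus Gaussian noise), or invoking a uniform stability / generalization assumption on the training procedure. Making this hidden hypothesis precise, and verifying it for the specific GAN architecture and optimizer used in PPFC-GAN, is where the real technical content of the lemma lives.
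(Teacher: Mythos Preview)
The paper does not prove this lemma at all: it is stated with a citation to \cite{lin2021privacy} and is used as a black box to derive Theorem~\ref{thm}. So there is no ``paper's own proof'' to compare against; the statement is imported wholesale from the cited reference.

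As to your sketch on its own merits: the outline (single-sample stability $\Rightarrow$ $O(1/n)$ TV bound on the generator's law $\Rightarrow$ $O(s/n)$ TV bound on the $s$-fold product $\Rightarrow$ $(\epsilon,\delta)$-DP with $\delta=O(s/n)$) is a coherent strategy, and the product/coupling step and the TV-to-DP conversion are both correct and standard. You have also correctly located the load-bearing step: the passage from a $1/n$ perturbation of the empirical measure to an $O(1/n)$ perturbation of the trained generator's output law. That step is \emph{not} free, and as written your argument does not establish it. Vanilla GAN training with SGD or Adam, as actually used in PPFC-GAN (no gradient clipping, no injected noise), has no known uniform-stability guarantee of this strength; a single changed training point can in principle move the learned generator by far more than $O(1/n)$ in TV. Invoking ``DP-SGD or a smoothness assumption'' is not a proof but a placeholder for one, and neither hypothesis is part of the lemma's statement or the setting of this paper.

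So your proposal is a reasonable heuristic reconstruction, but it has a genuine gap at exactly the point you flag, and that gap cannot be closed without either (i) an explicit privacy mechanism in training, which the paper does not use, or (ii) a nontrivial stability theorem for GAN training, which you would have to supply. If you want a rigorous argument, you should consult \cite{lin2021privacy} directly; the result there is obtained under specific modeling assumptions rather than via generic algorithmic stability.
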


% \noindent\textbf{Lemma 2.} \cite{mcsherry2009privacy} %Let Mi each provide -differential privacy. Let Di be arbitrary disjoint subsets of the input domain D. The sequence of Mi(X ∩ Di) provides -differential privacy.

% For disjoint subsets xi ⊆ x , let query f (xi) satisfy -differential privacy; then applying all queries f (xi) still satisfies -differential privacy.

\begin{theorem}
The GAN-generated samples shared by client $i$ guarantee $(\epsilon_i, \delta_i)$-differential privacy, where $\delta_i$ scales as $O(s_i / n_i)$, $s_i$ is the number of the generated samples and $n_i$ is the size of local data of client $i$.
\label{thm}
\end{theorem}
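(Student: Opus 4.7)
The plan is to reduce Theorem~\ref{thm} to a single, per-client invocation of Lemma~1. The first step is to isolate the mechanism whose privacy I want to analyze: for each client $i$, define $M_i$ to be the (randomized) map that takes the local dataset $X^{(i)}$ of size $n_i$, trains a local GAN $G^{(i)}$ on $X^{(i)}$ using the objective in Eq.~(\ref{GAN}), draws $s_i$ synthetic points from $G^{(i)}$, and outputs the resulting set $\hat{X}^{(i)}$. Because $G^{(i)}$ is trained solely on $X^{(i)}$ with randomness that is independent of the data held by the other clients, $M_i$ is exactly the centralized ``train-a-GAN-then-sample'' mechanism covered by Lemma~1, with the role of $n$ played by $n_i$ and the role of $s$ played by $s_i$.

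The second step is simply to apply Lemma~1 to $M_i$ with these substitutions. This yields that $M_i$ satisfies $(\epsilon_i, \delta_i)$-differential privacy with $\delta_i = O(s_i/n_i)$, which is precisely what is claimed of client $i$'s shared samples in Theorem~\ref{thm}. No additional composition or amplification argument is needed, because the theorem is stated per client rather than as a joint guarantee over all clients' uploads.

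The one point I would spell out carefully is the meaning of ``neighboring datasets'' in this federated context. If client $i$'s local dataset is modified at a single record while every other client's data is held fixed, then only the output distribution of $M_i$ is affected; the other clients' synthetic uploads are functions of data independent of the modified record, so they contribute identically on both sides of the DP inequality and drop out. Hence the per-client guarantee supplied by Lemma~1 transports verbatim to the statement of the theorem.

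The main ``obstacle'', such as it is, is conceptual rather than technical: I need to verify that each client's local training pipeline (local data in, trained generator, $s_i$ fresh samples out) is genuinely an instance of the mechanism to which Lemma~1 applies, and that independence of per-client randomness means neighboring federated datasets can be reduced to neighboring single-client datasets. Once this framing is in place, the proof is immediate and no fresh estimation is required.
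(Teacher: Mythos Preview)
Your proposal is correct and follows exactly the paper's own argument: isolate the per-client mechanism, observe it is precisely the centralized GAN-training-then-sampling mechanism of Lemma~1, and invoke that lemma with $n=n_i$, $s=s_i$. Your additional paragraph on how neighboring federated datasets reduce to neighboring single-client datasets is a welcome clarification that the paper leaves implicit, but otherwise the two proofs are identical in structure and content.
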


\begin{proof}
Since each GAN is trained independently on individual clients, the privacy assurance of a client is solely tied to the generated samples they share with the server. According to Lemma 1, the GAN-generated samples shared by client $i$ guarantee $(\epsilon_i, \delta_i)$-differential privacy, where $\delta_i$ scales as $O(s_i / n_i)$.
\end{proof}

\section{Experimental results}
\label{sect4}
In this section, we first detail the experimental settings. Then, we showcase the effectiveness of PPFC-GAN on several datasets with different non-IID scenarios, and analyze the necessity of simultaneous dimensionality reduction and clustering in the proposed method, both quantitatively and qualitatively. Finally, we validate the sensitivity of different federated clustering methods to device failures induced by system heterogeneity, and summarize the experimental results.

\begin{table}[!t]
%\vspace{-3cm}
%\setlength{\abovecaptionskip}{0pt}%
%\setlength{\belowcaptionskip}{10pt}%
\caption{Description of datasets.}
%\hspace{-28mm}
\scalebox{0.7}{%放缩
\renewcommand{\arraystretch}{1.5} %行间距
\tabcolsep 3.75mm %列间距
\begin{tabular}{ccccc}
\hline\hline
\textbf{Dataset} & \textbf{Type} &\textbf{Size} & \textbf{Image size/Features} & \textbf{Class} \\\hline
MNIST           & Gray image        & 70000    & $28\times28$   & 10\\
Fashion-MNIST   & Gray image        & 70000    & $28\times28$   & 10\\
CIFAR-10        & RGB image        & 60000    & $32\times32$            & 10 \\
STL-10          & RGB image        & 13000    & $96\times96$           & 10 \\
Pendigits       & Time series  & 10992    & 16             & 10\\

\hline\hline
\label{datasets}
\end{tabular}}
\end{table}

\subsection{Experimental Settings}
There is still a lack of universal non-IID benchmark datasets for FL due to the complexity of federated learning itself \cite{li2022federated,hu2022oarf}. In this paper, following ref. \cite{chung2022federated}, we simulate different federated scenarios by dividing a real-world dataset into $k$ smaller subsets, with each subset corresponding to a specific client, and scaling \textbf{the non-IID levels} through the parameter $p$ for each client, where $k$ is the number of true clusters. For the $i$-th client with $s$ data samples, there are $p\cdot s$ ones sampled from the $i$-th cluster, while the remaining ones are sampled from the entire data. Specially, $p = 0$ means the data are randomly distributed on the clients, whereas $p = 1$ means each client is one cluster.

As is shown in Table \ref{datasets}, four image datasets MNIST, Fashion-MNIST, CIFAR-10 \cite{krizhevsky2009learning} and STL-10 \footnote{Note that, to reduce the computational cost of baseline methods and to use the same network structure for CIFAR-10, we performed a preprocessing step to resize the images in STL-10 to 32 $\times$ 32.} \cite{coates2011analysis}, and a time series dataset Pendigits \cite{keller2012hics} are selected for comprehensive analysis. In PPFC-GAN, all networks are trained with Adam Optimizer \cite{kingma2014adam}. Moreover, to avoid unrealistic tuning, we use the same stacked autoencoder (SAE) architecture in all experiments. The SAE architecture is very simple and the forward network of it has only 3 hidden layers which have 500, 500, 2000 neurons, respectively. The reconstruction network has a symmetric structure and the code layer has 10 neurons. More detailed hyperparameter settings can be found in the Appendix. Codes are available upon request and will be public available after acceptence.

\begin{table}[!t]
%\vspace{-3cm}
%\setlength{\abovecaptionskip}{10pt}%
%\setlength{\belowcaptionskip}{10pt}%
%\captionsetup{width=1.3\textwidth, font={scriptsize}} %调整表格标题宽度
\caption{NMI of clustering methods in different federated scenarios. For each comparison, the best result is highlighted in boldface.}
%\scriptsize
%\hspace{-20mm}
\scalebox{0.6}{%放缩
\renewcommand{\arraystretch}{1.25} %调行间距
\tabcolsep 1.85mm %space between two columns. 用于调整列间距
\begin{tabular}{ccccccccc}
\hline\hline

\multirow{2}{*}{Dataset} &\multirow{2}{*}{$p$} &\multicolumn{3}{c}{Centralized setting} &\multicolumn{4}{c}{Federated setting}\\ \cmidrule(r){3-5} \cmidrule(r){6-9}
\quad &\quad &\textcolor{mygray}{KM} &\textcolor{mygray}{FCM} &\textcolor{mygray}{DCN} &k-FED &FFCM &PPFC-GAN$^\dag$ &PPFC-GAN\\

\hline
\multirow{5}{*}{MNIST} &0.0 &\multirow{5}{*}{\textcolor{mygray}{0.5304}} &\multirow{5}{*}{\textcolor{mygray}{0.5187}} &\multirow{5}{*}{\textcolor{mygray}{0.8009}} &0.5081 &0.5157 &0.6026 &\textbf{0.6582}\\
\quad &0.25 &\quad &\quad &\quad &0.4879 &0.5264 &0.5883 &\textbf{0.6392}\\
\quad &0.5 &\quad &\quad &\quad &0.4515 &0.4693 &0.6065 &\textbf{0.6721}\\
\quad &0.75 &\quad &\quad &\quad &0.4552 &0.4855 &0.6657 &\textbf{0.7433}\\
\quad &1.0 &\quad &\quad &\quad &0.4142 &0.5372 &0.7572 &\textbf{0.8353}\\

\hline
\multirow{5}{*}{Fashion-MNIST} &0.0 &\multirow{5}{*}{\textcolor{mygray}{0.6070}} &\multirow{5}{*}{\textcolor{mygray}{0.6026}} &\multirow{5}{*}{\textcolor{mygray}{0.6391}} &0.5932 &0.5786 &0.5725 &\textbf{0.6091}\\
\quad &0.25 &\quad &\quad &\quad &0.5730 &\textbf{0.5995} &0.5519 &0.5975\\
\quad &0.5 &\quad &\quad &\quad &0.6143 &\textbf{0.6173} &0.5384 &0.5784\\
\quad &0.75 &\quad &\quad &\quad &0.5237 &\textbf{0.6139} &0.5696 &0.6103\\
\quad &1.0 &\quad &\quad &\quad &0.5452 &0.5855 &0.6255 &\textbf{0.6467}\\

\hline
\multirow{5}{*}{CIFAR-10} &0.0 &\multirow{5}{*}{\textcolor{mygray}{0.0871}} &\multirow{5}{*}{\textcolor{mygray}{0.0823}} &\multirow{5}{*}{\textcolor{mygray}{0.1260}} &0.0820 &0.0812 &0.1151 &\textbf{0.1165}\\
\quad &0.25 &\quad &\quad &\quad &0.0866 &0.0832 &0.1166 &\textbf{0.1185}\\
\quad &0.5 &\quad &\quad &\quad &0.0885 &0.0870 &0.1185 &\textbf{0.1237}\\
\quad &0.75 &\quad &\quad &\quad &0.0818 &0.0842 &\textbf{0.1157} &0.1157\\
\quad &1.0 &\quad &\quad &\quad &0.0881 &0.0832 &\textbf{0.1337} &0.1318\\

\hline
\multirow{5}{*}{STL-10} &0.0 &\multirow{5}{*}{\textcolor{mygray}{0.1532}} &\multirow{5}{*}{\textcolor{mygray}{0.1469}} &\multirow{5}{*}{\textcolor{mygray}{0.1718}} &\textbf{0.1468} &0.1436 &0.1318 &0.1318\\
\quad &0.25 &\quad &\quad &\quad &0.1472 &0.1493 &0.1449 &\textbf{0.1501}\\
\quad &0.5 &\quad &\quad &\quad &\textbf{0.1495} &0.1334 &0.1469 &0.1432\\
\quad &0.75 &\quad &\quad &\quad &0.1455 &0.1304 &0.1545 &\textbf{0.1590}\\
\quad &1.0 &\quad &\quad &\quad &0.1403 &0.1565 &0.1588 &\textbf{0.1629}\\

\hline
\multirow{5}{*}{Pendigits} &0.0 &\multirow{5}{*}{\textcolor{mygray}{0.6877}} &\multirow{5}{*}{\textcolor{mygray}{0.6862}} &\multirow{5}{*}{\textcolor{mygray}{0.7409}} &0.7001 &0.6866 &0.6812 &\textbf{0.7179}\\
\quad &0.25 &\quad &\quad &\quad &0.6620 &0.6848 &0.6618 &\textbf{0.7054}\\
\quad &0.5 &\quad &\quad &\quad &0.6625 &0.6798 &0.6852 &\textbf{0.7161}\\
\quad &0.75 &\quad &\quad &\quad &0.5521 &0.6757 &0.7057 &\textbf{0.7472}\\
\quad &1.0 &\quad &\quad &\quad &0.6296 &\textbf{0.7236} &0.5927 &0.5627\\
\hline
count &- &- &- &- &2 &4 &2 &17\\
\hline\hline
\end{tabular}\label{NMI}}
\end{table}

\begin{table}[!t]
%\vspace{-3cm}
%\setlength{\abovecaptionskip}{10pt}%
%\setlength{\belowcaptionskip}{10pt}%
%\captionsetup{width=1.3\textwidth, font={scriptsize}} %调整表格标题宽度
\caption{Kappa of clustering methods in different federated scenarios. For each comparison, the best result is highlighted in boldface.}
%\scriptsize
%\hspace{-20mm}
\scalebox{0.6}{%放缩
\renewcommand{\arraystretch}{1.25} %调行间距
\tabcolsep 1.85mm %space between two columns. 用于调整列间距
\begin{tabular}{ccccccccc}
\hline\hline

\multirow{2}{*}{Dataset} &\multirow{2}{*}{$p$} &\multicolumn{3}{c}{Centralized setting} &\multicolumn{4}{c}{Federated setting}\\ \cmidrule(r){3-5} \cmidrule(r){6-9}
\quad &\quad &\textcolor{mygray}{KM} &\textcolor{mygray}{FCM} &\textcolor{mygray}{DCN} &k-FED &FFCM &PPFC-GAN$^\dag$ &PPFC-GAN\\

\hline
\multirow{5}{*}{MNIST} &0.0 &\multirow{5}{*}{\textcolor{mygray}{0.4786}} &\multirow{5}{*}{\textcolor{mygray}{0.5024}} &\multirow{5}{*}{\textcolor{mygray}{0.7699}} &0.5026 &0.5060 &0.6065 &\textbf{0.6134}\\
\quad &0.25 &\quad &\quad &\quad &0.4000 &0.5105 &\textbf{0.5848} &0.5773\\
\quad &0.5 &\quad &\quad &\quad &0.3636 &0.3972 &0.5862 &\textbf{0.6007}\\
\quad &0.75 &\quad &\quad &\quad &0.3558 &0.4543 &0.6508 &\textbf{0.6892}\\
\quad &1.0 &\quad &\quad &\quad &0.3386 &0.5103 &0.7480 &\textbf{0.7884}\\

\hline
\multirow{5}{*}{Fashion-MNIST} &0.0 &\multirow{5}{*}{\textcolor{mygray}{0.4778}} &\multirow{5}{*}{\textcolor{mygray}{0.5212}} &\multirow{5}{*}{\textcolor{mygray}{0.5186}} &0.4657 &\textbf{0.4974} &0.4918 &0.4857\\
\quad &0.25 &\quad &\quad &\quad &\textbf{0.5222} &0.5180 &0.4380 &0.4721\\
\quad &0.5 &\quad &\quad &\quad &0.4951 &\textbf{0.4974} &0.4336 &0.4552\\
\quad &0.75 &\quad &\quad &\quad &0.4240 &\textbf{0.4995} &0.4625 &0.4774\\
\quad &1.0 &\quad &\quad &\quad &0.3923 &0.4672 &\textbf{0.5794} &0.5745\\

\hline
\multirow{5}{*}{CIFAR-10} &0.0 &\multirow{5}{*}{\textcolor{mygray}{0.1347}} &\multirow{5}{*}{\textcolor{mygray}{0.1437}} &\multirow{5}{*}{\textcolor{mygray}{0.1599}} &0.1305 &0.1439 &\textbf{0.1488} &0.1426\\
\quad &0.25 &\quad &\quad &\quad &0.1366 &\textbf{0.1491} &0.1458 &0.1400\\
\quad &0.5 &\quad &\quad &\quad &0.1252 &0.1316 &0.1422 &\textbf{0.1443}\\
\quad &0.75 &\quad &\quad &\quad &0.1303 &0.1197 &\textbf{0.1412} &0.1358\\
\quad &1.0 &\quad &\quad &\quad &0.1147 &0.1237 &\textbf{0.1612} &0.1499\\

\hline
\multirow{5}{*}{STL-10} &0.0 &\multirow{5}{*}{\textcolor{mygray}{0.1550}} &\multirow{5}{*}{\textcolor{mygray}{0.1602}} &\multirow{5}{*}{\textcolor{mygray}{0.1909}} &0.1390 &0.1514 &\textbf{0.1579} &0.1557\\
\quad &0.25 &\quad &\quad &\quad &0.1361 &0.1479 &0.1578 &\textbf{0.1611}\\
\quad &0.5 &\quad &\quad &\quad &0.1505 &0.1112 &\textbf{0.1695} &0.1415\\
\quad &0.75 &\quad &\quad &\quad &0.1256 &0.1001 &0.1762 &\textbf{0.1813}\\
\quad &1.0 &\quad &\quad &\quad &0.1328 &0.1351 &0.1832 &\textbf{0.1868}\\

\hline
\multirow{5}{*}{Pendigits} &0.0 &\multirow{5}{*}{\textcolor{mygray}{0.6523}} &\multirow{5}{*}{\textcolor{mygray}{0.6521}} &\multirow{5}{*}{\textcolor{mygray}{0.7489}} &0.7079 &0.6523 &\textbf{0.7403} &0.6966\\
\quad &0.25 &\quad &\quad &\quad &0.6420 &0.6535 &0.7283 &\textbf{0.7466}\\
\quad &0.5 &\quad &\quad &\quad &0.6285 &0.6823 &\textbf{0.7277} &0.6916\\
\quad &0.75 &\quad &\quad &\quad &0.4493 &0.6323 &0.7542 &\textbf{0.7774}\\
\quad &1.0 &\quad &\quad &\quad &0.5222 &\textbf{0.6772} &0.5724 &0.3929\\
\hline

count &- &- &- &- &1 &5 &9 &10\\
\hline\hline
\end{tabular}\label{Kappa}}
\end{table}

\begin{figure}[!t]
%\vspace{-3cm}
\centering
%\setlength{\abovecaptionskip}{5pt}%
%\setlength{\belowcaptionskip}{15pt}
%\includegraphics[height=3.9cm,width=7cm]{figure/Fig1.pdf}
%\captionsetup{width=0.5\textwidth} %调整标题宽度
\includegraphics[height = 5cm, width = 8cm]{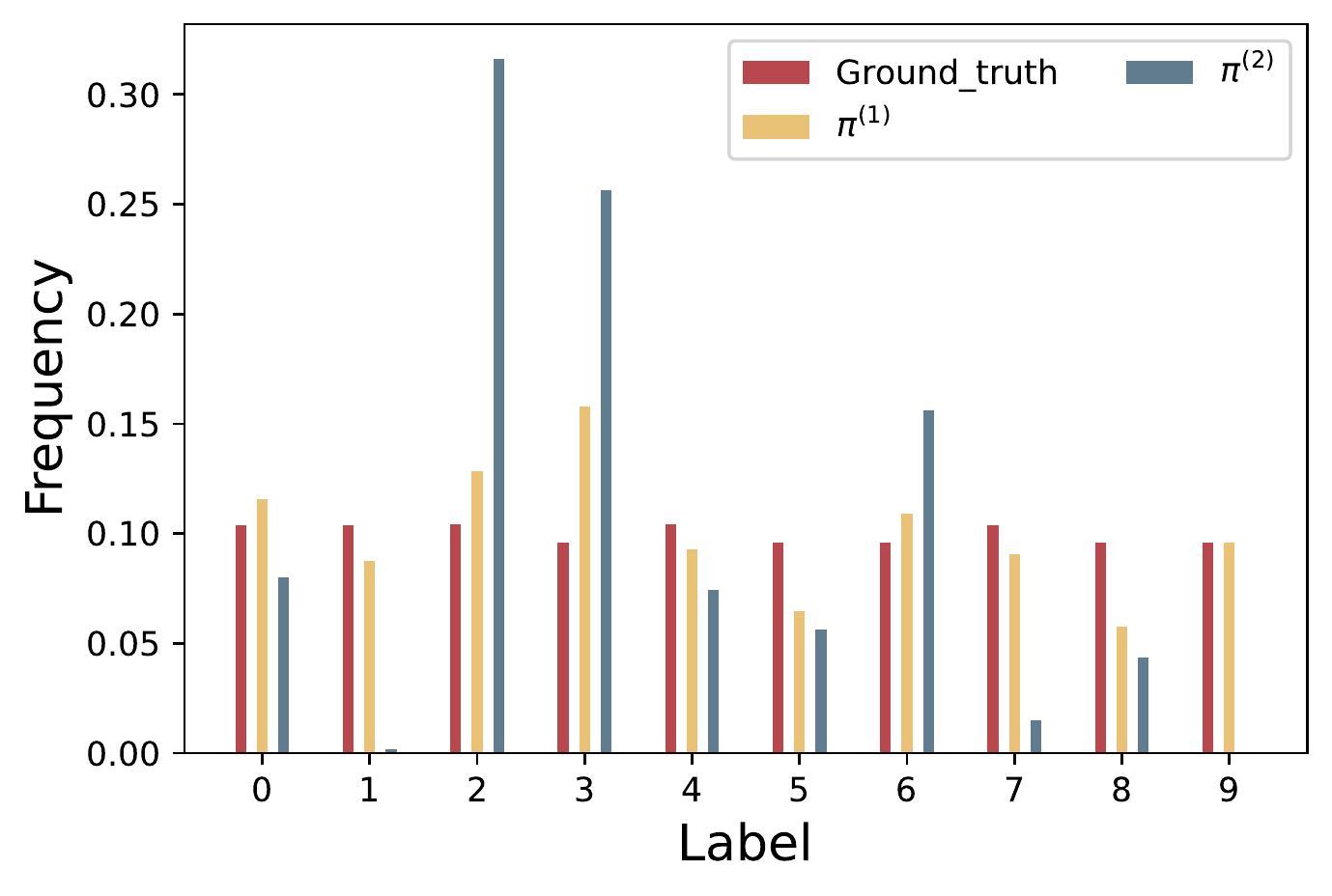}
%4,5.5
\caption{The label distribution of different partitions on the dataset Pendigits with $p = 0$. $\pi^{(1)}$ and $\pi^{(2)}$ are two clustering results obtained by PPFC-GAN$^\dag$ and PPFC-GAN, respectively.}
\label{case_kappa}
\end{figure}

\begin{figure*}[!t]
\centering
%\vspace{-3cm}
\subfigure[Global real dataset]{
\includegraphics[height = 6cm, width = 6.4cm]{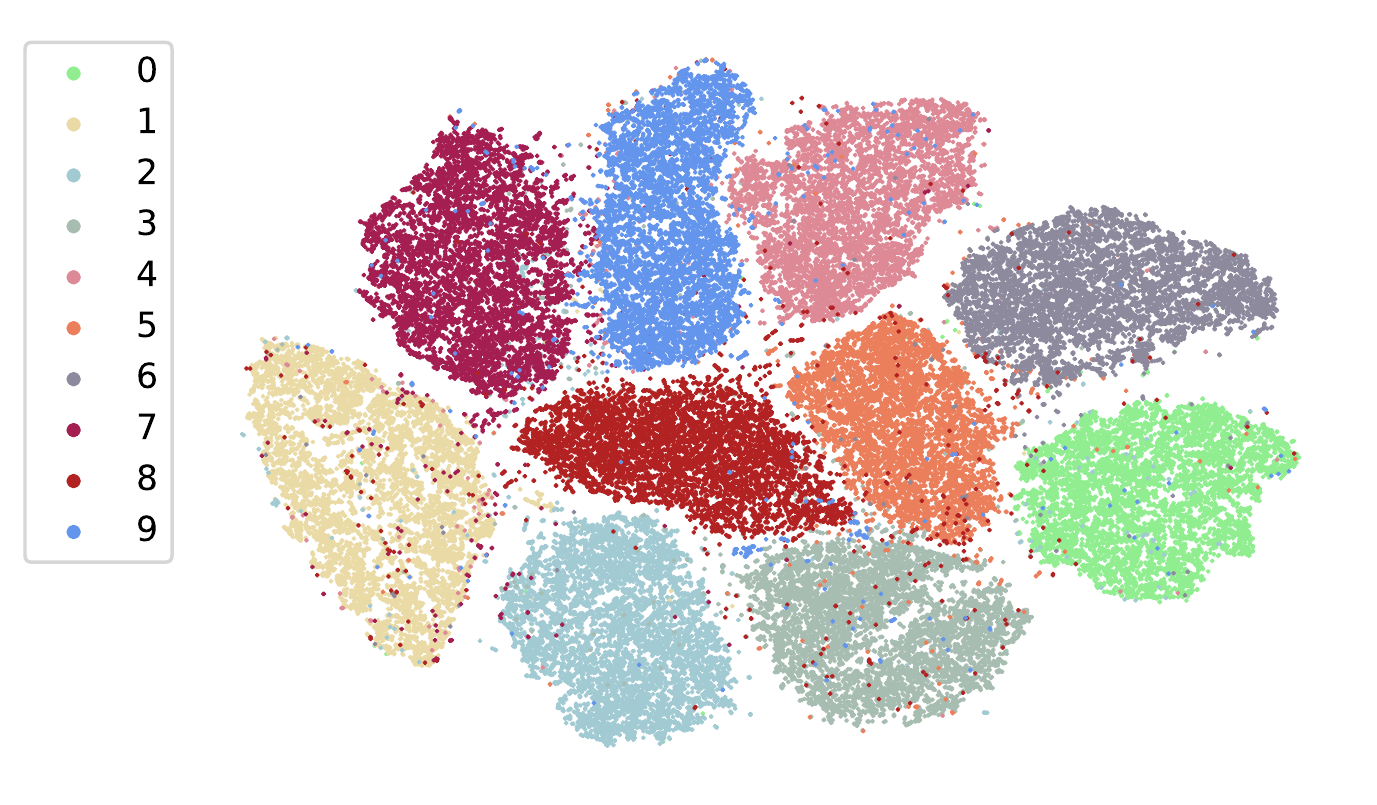}}
\quad
\subfigure[Global synthetic dataset]{
\includegraphics[height = 6cm, width = 6.4cm]{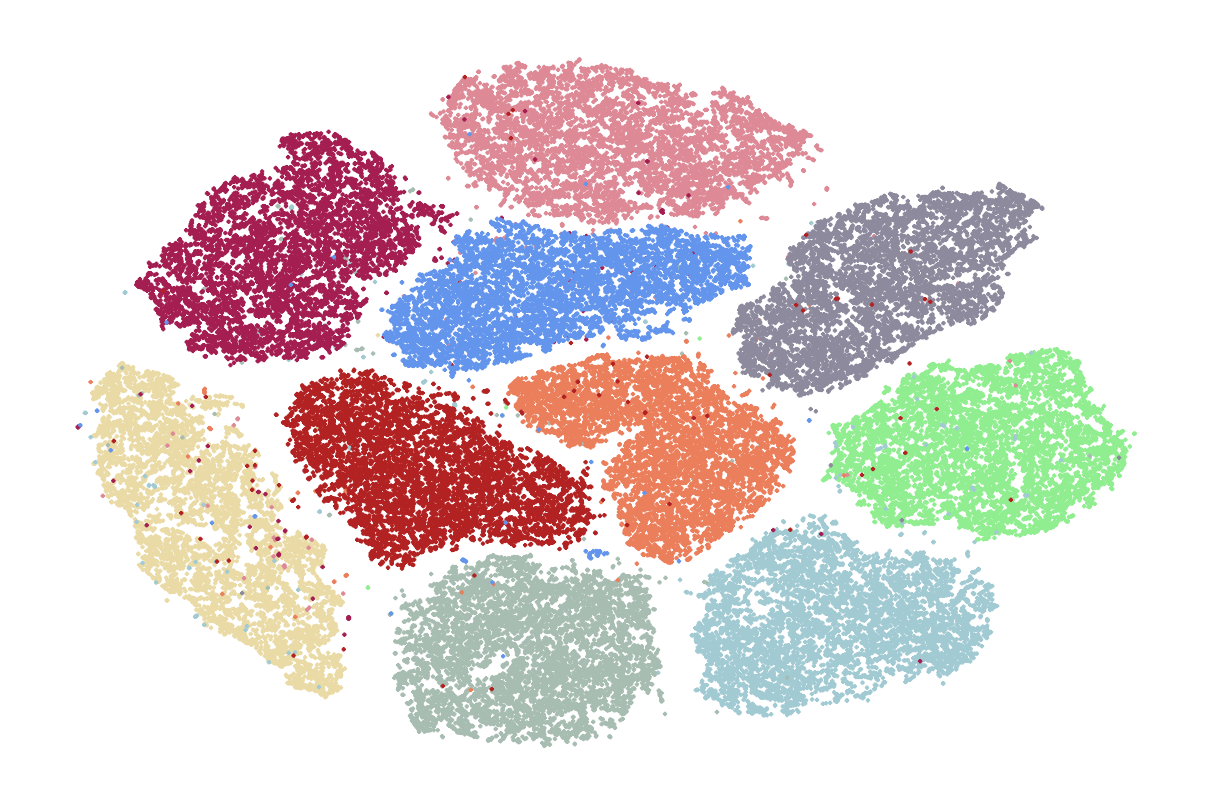}}

%\quad
\subfigure[PPFC-GAN$^\dag$ result]{
\includegraphics[height = 6cm, width = 6.4cm]{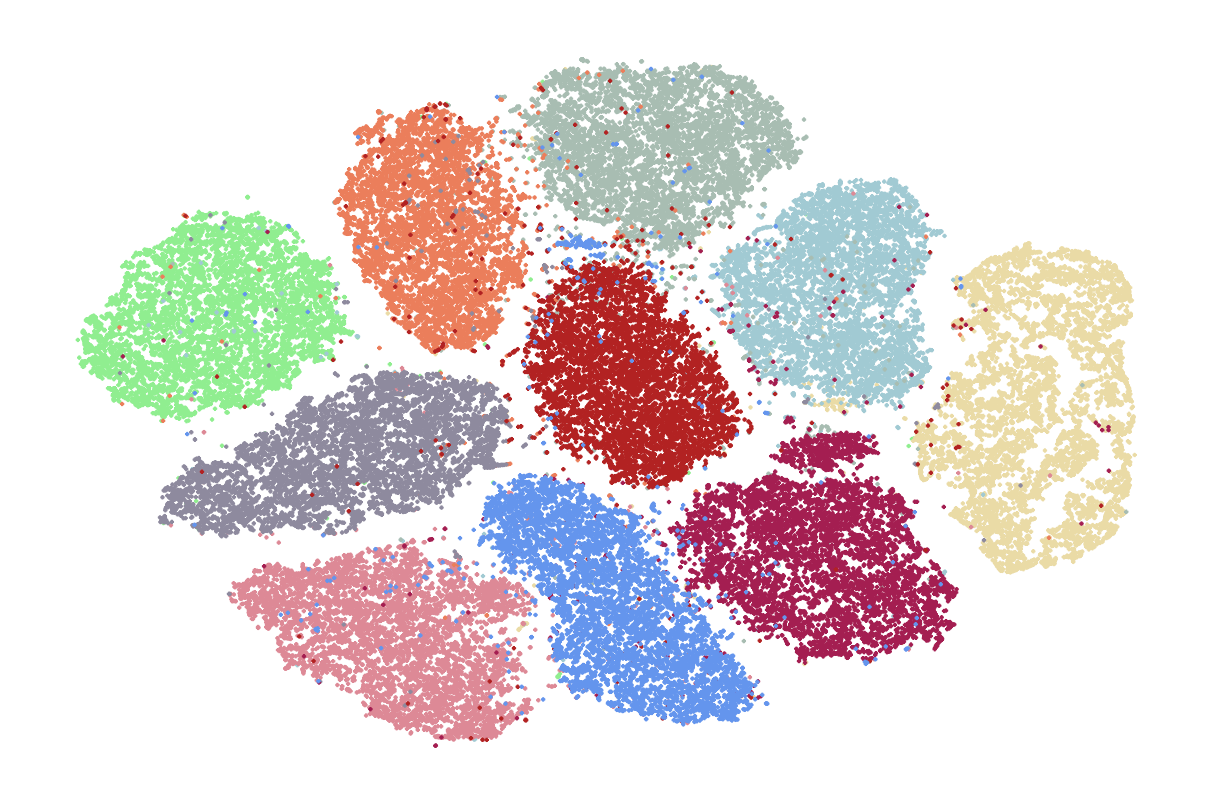}}
\quad
\subfigure[PPFC-GAN result]{
\includegraphics[height = 6cm, width = 6.4cm]{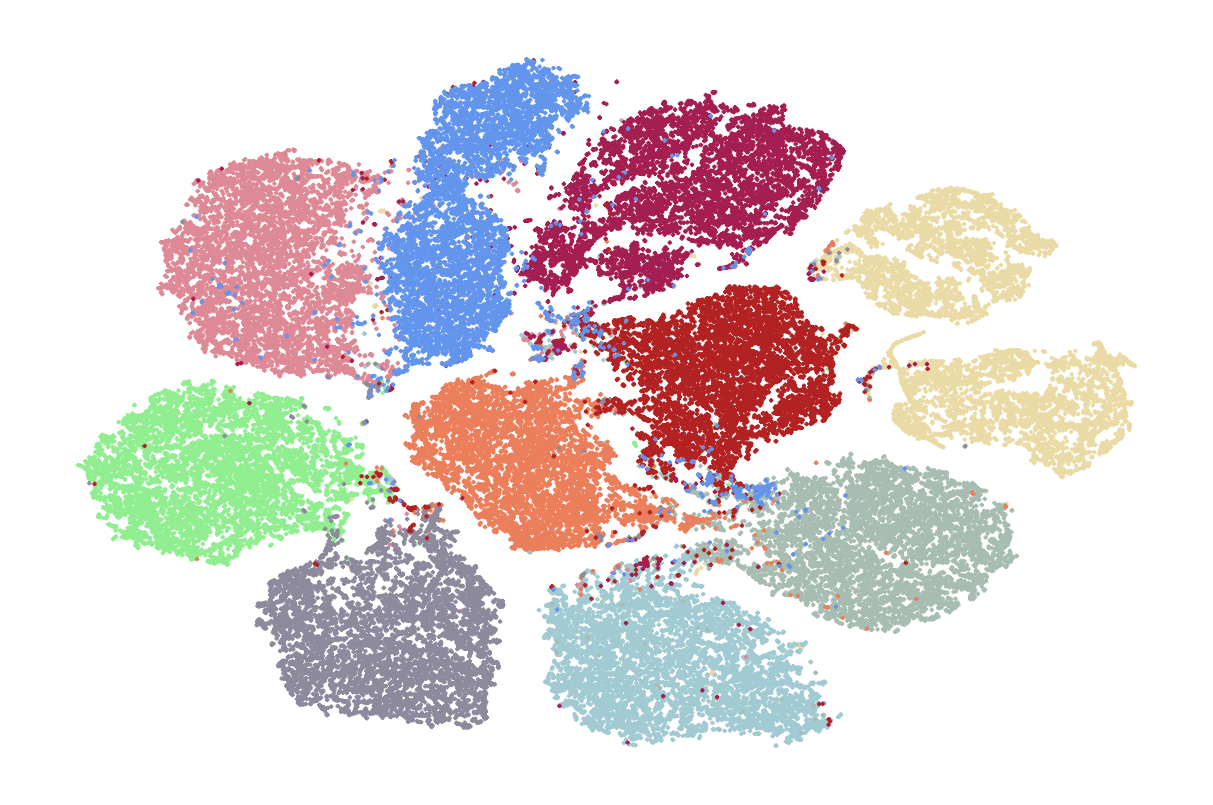}}
\caption{t-SNE visualization on the dataset MNIST with the non-IID level $p = 1$. Each color corresponds to a digit in MNIST.}
\label{tsne_latent}
\end{figure*}

\subsection{Effectiveness analysis of PPFC-GAN}
Two state-of-the-art FC methods, k-FED \cite{dennis2021heterogeneity} and federated fuzzy c-means (FFCM) \cite{stallmann2022towards}, are used to validate the effectiveness of PPFC-GAN. We also conduct ablation analysis to gain a better understanding of the method, i.e., performing  dimensionality reduction SAE and KM clustering sequentially, named as PPFC-GAN$^\dag$.

The numerical results of NMI \cite{strehl2002cluster} and kappa \cite{liu2019evaluation} are shown in Table \ref{NMI} and Table \ref{Kappa}.  One can see that: 1) Both metrics indicate that dimensionality reduction, SAE, can improve the clustering performance significantly, and the performance can be even better by the joint model, PPFC-GAN. For example, on the dataset MNIST with $p = 1$, PPFC-GAN$^\dag$ and PPFC-GAN improve k-FED by about 0.34 and 0.42 based on NMI respectively. And those based on kappa are about 0.41 and 0.45 respectively. 2) With the help of synthetic dataset, the proposed method demonstrates greater efficacy and robustness compared to the baselines, and can even benefit from non-IID scenarios. In particular, it exhibits superior performance compared to the centralized counterpart (DCN) when $p$ is large on MNIST and Fashion-MNIST. Actually, for each client, a higher value of $p$ corresponds to a greater proportion of samples from the same cluster, meaning that they are more similar with each other and the local GAN is easier to be trained. As a result, the cluster structuer of the global synthetic dataset is more pronounced, sometimes even clearer than that of the real dataset, which can be seen in Fig. \ref{tsne_latent}. It is noteworthy that a large $p$ is in line with the reality, as the preference of most people is focused on a single category/cluster. 3) The two evaluation metrics, NMI and kappa, yield significantly different rankings. The NMI values suggest that the joint model, PPFC-GAN, can enhance the performance in most cases, whereas the kappa values indicate that the performance of PPFC-GAN is overestimated and sometimes even regress. Although kappa discourages the proposed method, we must honestly point out that it is a more reliable metric than NMI. As shown in Fig. \ref{case_kappa}, there are two partitions $\pi^{(1)}$ and $\pi^{(2)}$ obtained by PPFC-GAN$^\dag$ and PPFC-GAN, respectively. Obviously, the label distribution of $\pi^{(1)}$ is more close to the ground-truth one. However, the NMI of $\pi^{(1)}$ and $\pi^{(2)}$ are 0.6812 and 0.7179 respectively, which is unreasonable. On the contrary, the rank obtained by kappa is more reasonable, 0.7403 for $\pi^{(1)}$ and 0.6966 for $\pi^{(2)}$. A more comprehensive discussion and more evidence of the disadvantages of NMI can be found in \cite{liu2019evaluation,yan2022selective}. 4) We further narrow the gap between federated clustering and centralized clustering by a big margin.

Although we have validated that dimensionality reduction is beneficial to clustering performance, an intuitive understanding of how it works in the clustering process is still lacking. Hence, we visualize the data distribution in both the original and latent data space using t-SNE \cite{van2008visualizing} in Fig. \ref{tsne_latent}. From the figure, one can see that: 1) There are many data points from different clusters mixed together. 2) The problem can be alleviated by PPFC-GAN$^\dag$ and PPFC-GAN. 3) The global synthetic dataset shows a clearer cluster structure than the global real dataset. These are why PPFC-GAN is more effective than the federated baselines, and even superior to DCN in centralized setting when $p$ = 1.

\begin{figure*}[!t]
%\vspace{-3cm}
\centering
%\setlength{\abovecaptionskip}{5pt}%
%\setlength{\belowcaptionskip}{15pt}
%\includegraphics[height=3.9cm,width=7cm]{figure/Fig1.pdf}
%\captionsetup{width=0.5\textwidth} %调整标题宽度
\includegraphics[height = 8cm, width = 13.75cm]{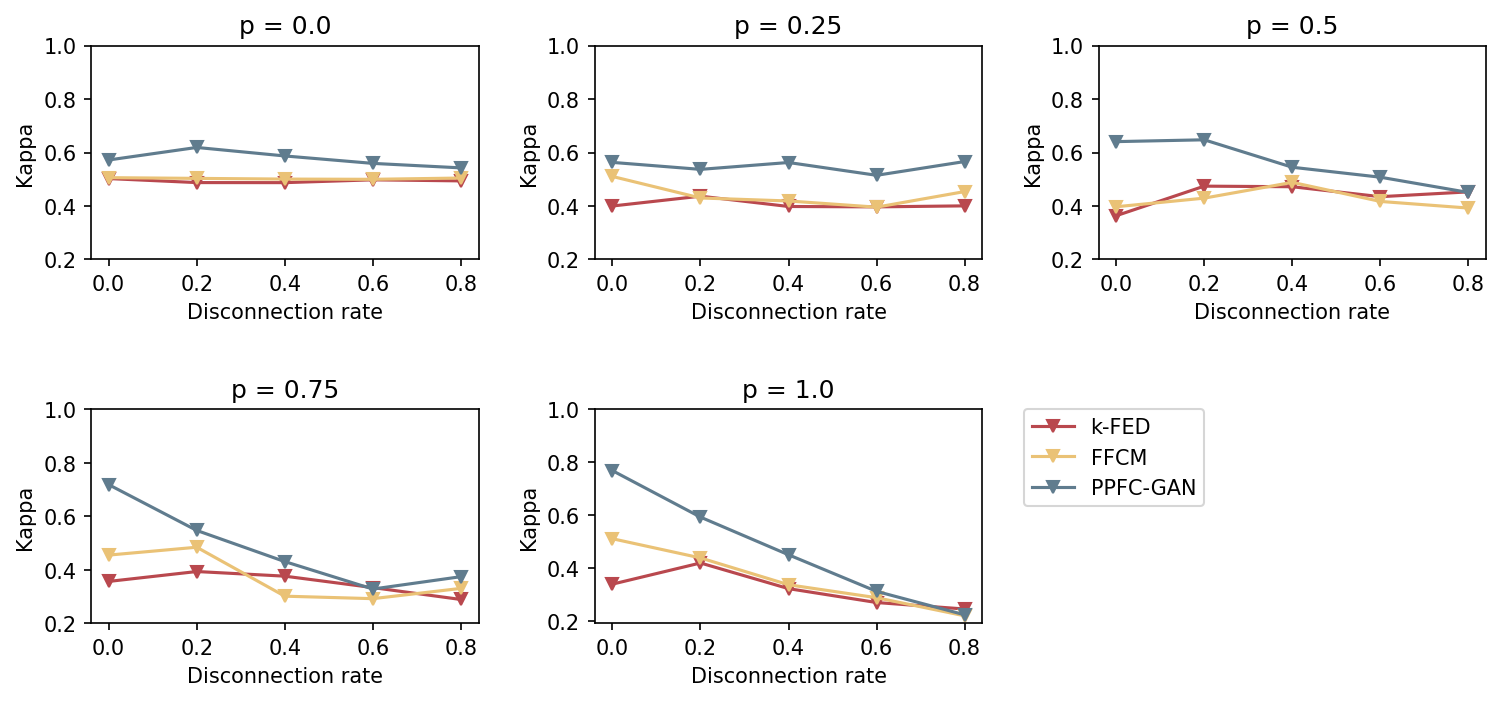}
%4,5.5
\caption{The relations between the clustering performance and the device failures on the dataset MNIST.}
\label{case_ft}
\end{figure*}

\subsection{Sensitivity analysis of clustering performance to device failures}

During the training process, some client devices may lose connection with the server due to wireless network fluctuations, energy constraints, etc. Consequently, some specific data characteristics of the failed devices may be lost, resulting in poor and unrobust performance. Hence, it is critical for a federated model to be resilient to device failures.

To simulate different disconnected scenarios, we define \textbf{disconnection rate} that measures the percentage of the failed devices among all devices. In all disconnected scenarios, following \cite{li2020federated}, the federated model simply ignores the failed devices and continues training with the remaining ones. As shown in Fig. \ref{case_ft}, one can observe that: 1) The proposed method is almost always superior to k-FED and FFCM by a big margin. 2) The sensitivity of clustering performance to device failures is positively correlated with the non-IID level $p$, i.e., device failures affect the clustering performance more severely as $p$ increases. This is because the level of complementarity among clients decreases with a higher value of $p$, and in the extreme case where $p = 1$, there is no complementarity among clients.

In summary: 1) Dimensionality reduction can improve the clustering performance significantly. 2) The proposed method is more effective and robust than the baselines in immunizing the non-IID problem and the device failures, and can even benefit from some non-IID scenarios. 3) The sensitivity of clustering performance to device failures is positively correlated with the non-IID level $p$. 4) Kappa is a more reliable metric than NMI.

\section{Conclusion}
\label{sect6}
In this study, we introduce Privacy-Preserving Federated Clustering with GAN-Generated Samples (PPFC-GAN). Our method addresses a critical challenge posed by non-independent and non-identically-distributed (non-IID) data across clients, which has been well-documented in previous research. By leveraging synthetic data produced by local Generative Adversarial Networks (GANs), our approach mitigates the non-IID issue. The proposed framework harnesses deep clustering models to estimate global cluster centroids, which are subsequently communicated to clients for precise sample labeling. Crucially, we establish a theoretical foundation that underscores the inherent privacy guarantees encapsulated within the GAN-generated samples shared among clients. Our thorough experiments underscores the superior efficacy and robustness of the proposed method compared to baseline techniques.

While the amalgamation of the clustering model, Deep Clustering Network (DCN), may not be perfectly tailored to handle the clustering of the synthetic global dataset, a promising avenue for future research lies in the exploration of an end-to-end joint framework that seamlessly integrates both steps. Such a framework holds potential to enhance the overall clustering performance. Finally, mounting evidence suggests that kappa is a more dependable metric than NMI, and therefore, it is recommended to utilize kappa over NMI for evaluating the clustering performance.

%% The Appendices part is started with the command \appendix;
%% appendix sections are then done as normal sections
%% \appendix

%% \section{}
%% \label{}

%% If you have bibdatabase file and want bibtex to generate the
%% bibitems, please use
%%
%%  \bibliographystyle{elsarticle-num}
%%  \bibliography{<your bibdatabase>}

%% else use the following coding to input the bibitems directly in the
%% TeX file.
\bibliographystyle{elsarticle-num}
\bibliography{references.bib}
%\begin{thebibliography}{00}

%% \bibitem{label}
%% Text of bibliographic item

\appendix
\section{Detailed hyperparameter settings}

The proposed method consists of two main steps, global synthetic data construction and cluster assignment. In the first step, we train $m$ local GANs with the local data, where $m$ is the number of clients and is set to the number of true clusters \cite{stallmann2022towards}. All of these local GANs have the same hyperparameter settings, and each one comprises two networks: the generator and discriminator. In the second step, we train a stacked autoencoder (SAE) on the  constructed global synthetic dataset. The detailed hyperparameter settings of these networks are tuned according to the specific dataset. For GAN, the hyperparameter settings are shown in Tables \ref{GAN_mnist}-\ref{GAN_pen}. For SAE, one can find those in Table \ref{AE}.

%\newpage
\begin{table*}[!h]
\center
%\vspace{-3cm}
%\setlength{\abovecaptionskip}{0pt}%
%\setlength{\belowcaptionskip}{10pt}%
\caption{GAN hyperparameters for MNIST and Fashion-MNIST. BN stands for batch normalization.}
%\hspace{-28mm}
%\scalebox{1}{%放缩
\renewcommand{\arraystretch}{1.25} %行间距
\tabcolsep 2mm %列间距
\begin{tabular}{lcccccc}
\hline\hline
Operation &Kernel &Stride &Padding &Layer width / Feature maps &BN? &Nonlinearity \\\hline
Generator (Input 72 dim vector)\\
Linear                      &-&-&- &1024 &$\surd$ &ReLU\\
Linear                      &-&-&- &6272 &$\surd$ &ReLU\\
Transposed Convolution      &4&2&1 &64 &$\surd$ &ReLU\\
Transposed Convolution      &4&2&1 &1 &- &Sigmoid\\\hline

Discriminator (Input 28 $\times$ 28 $\times$ 1)\\
Convolution                 &4&2&1 &64   &$\surd$ &LeakyReLU\\
Convolution                 &4&2&1 &128  &$\surd$ &LeakyReLU\\
\underline{Convolution} (Removed for Fashion-MNIST)  &4&2&1 &256  &$\surd$ &LeakyReLU\\
Linear                      &-&-&- &1024 &$\surd$ &LeakyReLU\\
Linear                      &-&-&- &1    &- &-\\\hline
Generator Optimizer &\multicolumn{6}{c}{Adam $(lr = \num{6e-4},\, \beta_1 = 0.5,\, \beta_2 = 0.999)$} \\
Discriminator Optimizer &\multicolumn{6}{c}{Adam $(lr = \num{2e-4},\, \beta_1 = 0.5,\, \beta_2 = 0.999)$}\\
Batch size &\multicolumn{6}{c}{64}\\
Leaky ReLU slope &\multicolumn{6}{c}{0.2}\\
\hline\hline
\label{GAN_mnist}
\end{tabular}%}
\end{table*}

\begin{table*}[!t]
%\vspace{-3cm}
%\setlength{\abovecaptionskip}{0pt}%
%\setlength{\belowcaptionskip}{10pt}%
\caption{GAN hyperparameters for CIFAR-10 and STL-10. BN stands for batch normalization.}
%\hspace{-28mm}
%\scalebox{0.6}{%放缩
\renewcommand{\arraystretch}{1.25} %行间距
\tabcolsep 4.55mm %列间距
\begin{tabular}{lcccccc}
\hline\hline
Operation &Kernel &Stride &Padding &Feature maps &BN? &Nonlinearity \\\hline
Generator (Input 1 $\times$ 1 $\times$ 72)\\
Transposed Convolution      &4&1&0 &128 &$\surd$ &ReLU\\
Transposed Convolution      &4&2&1 &64  &$\surd$ &ReLU\\
Transposed Convolution      &4&2&1 &32  &$\surd$ &ReLU\\
Transposed Convolution      &4&2&1 &3   &-       &Tanh\\\hline

Discriminator (Input 32 $\times$ 32 $\times$ 3)\\
Convolution                 &4&2&1 &32   &-  &LeakyReLU\\
Convolution                 &4&2&1 &64   &$\surd$  &LeakyReLU\\
Convolution                 &4&2&1 &128  &$\surd$  &LeakyReLU\\
Convolution                 &4&1&0 &1    &- &-\\\hline
Generator Optimizer &\multicolumn{6}{c}{Adam $(lr = \num{6e-4},\, \beta_1 = 0.5,\, \beta_2 = 0.999)$} \\
Discriminator Optimizer &\multicolumn{6}{c}{Adam $(lr = \num{2e-4},\, \beta_1 = 0.5,\, \beta_2 = 0.999)$}\\
Batch size &\multicolumn{6}{c}{64}\\
Leaky ReLU slope &\multicolumn{6}{c}{0.2}\\
\hline\hline%\\
\label{GAN_cf10}
\end{tabular}%}
\end{table*}

\begin{table*}[!t]
%\vspace{-3cm}
%\setlength{\abovecaptionskip}{0pt}%
%\setlength{\belowcaptionskip}{10pt}%
\caption{GAN hyperparameters for Pendigits. BN stands for batch normalization.}
%\hspace{-28mm}
%\scalebox{0.6}{%放缩
\renewcommand{\arraystretch}{1.25} %行间距
\tabcolsep 11.5mm %列间距
\begin{tabular}{lccc}
\hline\hline
Operation &Layer width &BN? &Nonlinearity \\\hline
Generator (Input 15 dim vector)\\
Linear    &256 &$\surd$ &LeakyReLU\\
Linear    &256 &$\surd$ &LeakyReLU\\
Linear    &16  &- &Sigmoid\\\hline

Discriminator (Input 16 dim vector)\\
Linear       &256 &$\surd$ &LeakyReLU\\
Linear       &256 &$\surd$ &LeakyReLU\\
Linear       &1   &-       &-\\\hline
Generator Optimizer &\multicolumn{3}{c}{Adam $(lr = \num{6e-4},\, \beta_1 = 0.5,\, \beta_2 = 0.999)$} \\
Discriminator Optimizer &\multicolumn{3}{c}{Adam $(lr = \num{2e-4},\, \beta_1 = 0.5,\, \beta_2 = 0.999)$}\\
Batch size &\multicolumn{3}{c}{64}\\
Leaky ReLU slope &\multicolumn{3}{c}{0.2}\\
\hline\hline
\label{GAN_pen}
\end{tabular}%}
\end{table*}

\begin{table*}[!t]
%\vspace{-3cm}
%\setlength{\abovecaptionskip}{0pt}%
%\setlength{\belowcaptionskip}{10pt}%
\caption{SAE hyperparameters. BN stands for batch normalization.}
%\hspace{-28mm}
%\scalebox{0.6}{%放缩
\renewcommand{\arraystretch}{1.5} %行间距
\tabcolsep 12.5mm %列间距
\begin{tabular}{lccc}
\hline\hline
Operation &Layer width &BN? &Nonlinearity \\\hline
Encoder (Input $z_c$ dim vector)\\
Linear    &500  &-       &ReLU\\
Linear    &500  &$\surd$ &ReLU\\
Linear    &2000 &$\surd$ &ReLU\\
Linear    &10   &$\surd$ &ReLU\\\hline

Decoder (Input 10 dim vector) \\
Linear       &2000   &$\surd$ &ReLU\\
Linear       &500    &$\surd$ &ReLU\\
Linear       &500    &$\surd$ &ReLU\\
Linear       &$z_c$  &-       &-\\\hline
Optimizer &\multicolumn{3}{c}{Adam $(lr = \num{2e-3},\, \beta_1 = 0.9,\, \beta_2 = 0.999)$} \\
Batch size &\multicolumn{3}{c}{$0.01 \times n$}\\
Leaky ReLU slope &\multicolumn{3}{c}{0.2}\\
\hline\hline\\
\multicolumn{4}{l}{where $z_c$ = 784 for MNIST and Fashion-MNIST,  3072 for CIFAR-10 and STL-10, 16 for Pendigits. $n$ is the nu-}\\
\multicolumn{4}{l}{mber of samples in the constructed global synthetic dataset.}
\label{AE}
\end{tabular}%}
\end{table*}

\,
\newpage
\,
\newpage

\end{sloppypar}
\end{document}